\title{Preemptive Image Robustification for Protecting Users \\ against Man-in-the-Middle Adversarial Attacks}
\author {
    % Authors
    Seungyong Moon\equalcontrib\textsuperscript{\rm 1},
    Gaon An\equalcontrib\textsuperscript{\rm 1},
    Hyun Oh Song\thanks{Corresponding author}\textsuperscript{\rm 1}\textsuperscript{\rm 2}
}
\title{My Publication Title --- Single Author}
\author {
    Author Name
}
\title{My Publication Title --- Multiple Authors}
\author {
    % Authors
    First Author Name,\textsuperscript{\rm 1}
    Second Author Name, \textsuperscript{\rm 2}
    Third Author Name \textsuperscript{\rm 1}
}
\begin{document}

\maketitle

\begin{abstract}
Deep neural networks have become the driving force of modern image recognition systems. However, the vulnerability of neural networks against adversarial attacks poses a serious threat to the people affected by these systems. In this paper, we focus on a real-world threat model where a Man-in-the-Middle adversary maliciously intercepts and perturbs images web users upload online. This type of attack can raise severe ethical concerns on top of simple performance degradation. To prevent this attack, we devise a novel bi-level optimization algorithm that finds points in the vicinity of natural images that are robust to adversarial perturbations. Experiments on CIFAR-10 and ImageNet show our method can effectively robustify natural images within the given modification budget. We also show the proposed method can improve robustness when jointly used with randomized smoothing.
\end{abstract}

% Introduction
\section{Introduction}\label{sec:intro}

Recent progress in deep neural networks has enabled substantial performance gains in various computer vision tasks, including image classification, object detection, and semantic segmentation. Leveraging this advance, more practitioners are deploying neural network-based image recognition systems in real-world applications, such as image tagging or face recognition. However, neural networks are vulnerable to \emph{adversarial examples} \citep{szegedy13}, minute input perturbations intentionally designed to mislead networks to yield incorrect predictions. These adversarial examples can significantly degrade the performance of the network models, raising security concerns about their deployment.

\begin{figure}[t]
	\centering
	\includegraphics[width=0.86\columnwidth]{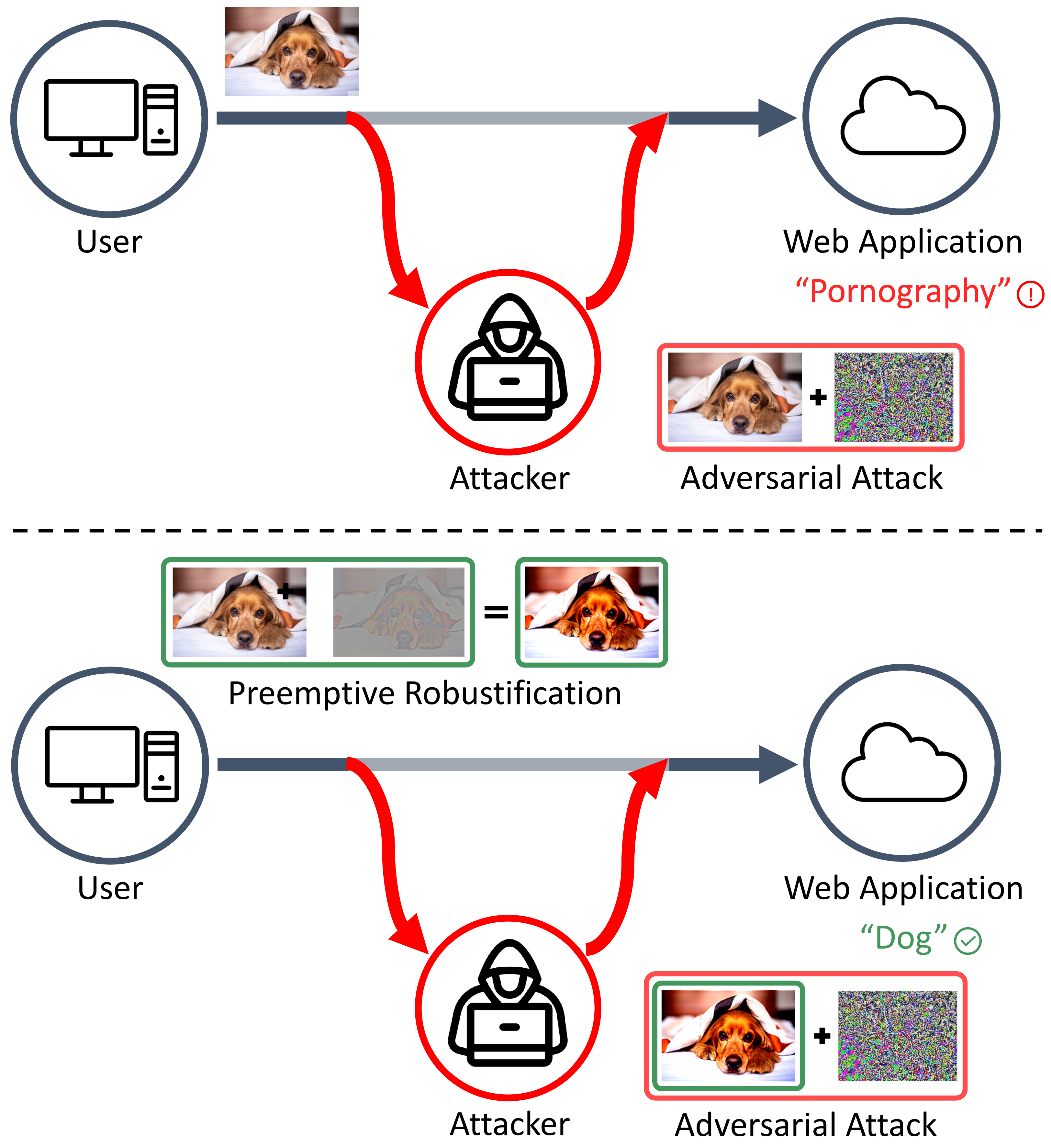}
	\caption{Illustration of our proposed method. Without protection, a MitM adversary can easily perturb the user's image to be misclassified by the web application (top). Our proposed method preemptively robustifies the user's image, securing the image from adversarial attack (bottom).}
	\label{fig:overview}
\end{figure}

When these image recognition systems are deployed to applications where users freely upload images from local machines to remote storage, such as social media, this vulnerability can pose another serious threat, especially to the individual application users. Consider there exists a man-in-the-middle (MitM) adversary that can intercept and add perturbations to the images web users upload during transmission (Figure 1). Then, this adversary can easily vandalize neural network-based web services such as image auto-tagging in social media apps by perturbing the images to be misclassified. This type of attack can severely deteriorate user experience, especially as the adversary can further use this attack to insult the uploaders beyond simple misclassification. Even though the MitM attack is one of the most lethal cyber threats \citep{desmedt2011man, li19video, wang19maliciousgen}, protecting neural networks from this type of attack has been much less studied in adversarial machine learning literature.

In this work, we develop a new defense framework to protect web users from MitM attacks. To provide more effective protection measures for the users, we focus on the fact that users hold control of their images before the adversary, unlike in conventional adversarial attack scenarios. Based on this observation, we ask the following question:

\begin{itemize}
	\item \emph{Can we preemptively manipulate images slightly to be robust against MitM adversarial attacks?}
\end{itemize}

To answer this, we explore the existence of points in image space that are resistant to adversarial perturbations, given a trained classifier. We propose a novel bi-level optimization algorithm for finding those robust points under a given modification budget starting from natural images and measure the degree of robustness achievable by utilizing these points, which we denote as \emph{preemptive robustness}. Moreover, we propose a new network training scheme that further improves preemptive robustness. We validate the effectiveness of our proposed framework in the image classification task on CIFAR-10 \citep{cifar} and ImageNet \citep{imagenet} datasets. Our extensive experiments demonstrate that our framework can successfully robustify images against a wide range of adversarial attacks in both black-box and white-box settings. We also observe that our method can enhance the preemptive robustness on smoothed classifiers.

In summary, our main contributions are as follows:

\begin{itemize}
	\item We introduce a novel real-world adversarial attack scenario targeting users in image recognition systems.
	\item We propose a new defense framework to improve preemptive robustness and formulate the preemptive robustification process as a bi-level optimization problem.
	\item We demonstrate our proposed framework can significantly improve preemptive robustness against a wide range of adversarial attacks on standard benchmarks.
\end{itemize}

% Related works
\section{Related Works}\label{sec:rel}

\paragraph{Adversarial robustness of neural networks} 
Most prior work on adversarial robustness aims to train neural networks that achieve high accuracies on adversarially perturbed inputs. PGD adversarial training improves the empirical robustness of neural networks by augmenting training data with multi-step PGD adversarial examples \citep{pgd}. Some recent works report performance gains over PGD adversarial training by modifying the adversarial example generation procedure \citep{zhang2019defense, zhang2020fat}. However, most of the recent algorithmic improvements can be matched by simply using early stopping with PGD adversarial training \citep{rice20, croce2020reliable}. Despite its effectiveness, a major drawback of adversarial training is that it takes a huge computational cost to generate adversarial examples during training. To address this, several works develop fast adversarial training methods by reusing the gradient computation or reducing the number of attack iterations \citep{free, yopo, fast}.

Although such adversarial training methods can significantly improve the empirical robustness of neural networks, there is no guarantee that a stronger, newly-discovered attack would not break them. To address this, a separate line of work focuses on certifying robustness against any adversarial perturbations \cite{raghunathan2018certified, wong2018provable} but often has difficulty in scaling to large neural networks. Randomized smoothing, a method that injects random additive noises to inputs to construct smoothed classifiers from base classifiers, has been considered the most successful certified defense approach that can be applied to large neural networks \citep{lecuyer19, cohen19, salman19, yang2020randomized}.

\paragraph{Preemptive image manipulation for robustness}
There have been a few studies on preemptive image manipulation to protect images from being exploited, yet most of them utilize it for privacy protection. In the facial recognition task, prior work proposes an algorithm that slightly modifies personal photos before uploading them to social media to make them hard to be identified by malicious person recognition systems \citep{oh2017adversarial, shan2020fawkes, cherepanova2021lowkey}. Our work differs from this prior work in that our goal is to robustify images to be correctly identified by classification systems even under adversarial perturbations. 

The most relevant to our work is an approach of \citet{salman2020unadversarial}, which develops patches that can boost robustness to common corruptions when applied to clean images. However, we consider the problem of manipulating images to be correctly classified against worst-case adversarial perturbations, which are artificially designed to cause misclassification. Ensuring robustness to adversarial perturbations is much more challenging than robustifying images against common corruptions, which are not the worst-case perturbations.

% Methods
\section{Methods}\label{sec:method}

\subsection{Problem Setup}\label{sec:setup}

To start, we introduce our defense framework for image classification models, along with the adversarial threat model. In our framework, a defender can preemptively modify the original image $x_o$ to produce a new image $x_r$ that is visually indistinguishable from $x_o$ ahead of adversarial attacks. After the modification, the defender discards the original image $x_o$, so that adversary can only see the modified image $x_r$. Under this framework, we can consider two types of adversaries:
\begin{itemize}
	\item A \emph{grey-box} adversary who has complete knowledge of the classification model but does not know the modification algorithm exists. 
	\item A \emph{white-box} adversary who not only has full access to the classification model but also is aware of the existence of the modification algorithm and how it works.
\end{itemize}
The grey-box adversary will regard the given image $x_r$ as the original image and attempt to find an adversarial example near $x_r$, as from the conventional adversarial literature. In contrast, the white-box adversary recognizes that $x_r$ is a modified version. Thus, the white-box adversary will instead try to guess the location of the original image $x_o$ and craft an adversarial example near it.

In this paper, we investigate the defender's optimal strategy for manipulating original images to be resistant against these two adversaries. First, we develop an algorithm that preemptively robustifies original images against the grey-box adversary. Then, we demonstrate our proposed algorithm also exhibits high robustness against adaptively designed white-box attacks.

\subsection{Preemptive Robustness}\label{sec:definition}

We now formally introduce the concept of \emph{preemptive robustness}. We begin by recalling the definition of adversarial examples. Let $c: \mathcal{X} \to \mathcal{Y}$ be a classifier which maps images to class labels. Given an original image $x_o \in \mathcal{X}$ and its class $y_o \in \mathcal{Y}$, suppose $x_o$ is correctly classified. Then, an adversarial example $x_o^a \in \mathcal{X}$ of $x_o$ is defined as an image in the neighborhood of $x_o$ such that the classifier changes its prediction, \ie, $c(x_o^a) \neq c(x_o)$ and $x_o^a \in B_{\epsilon}(x_o)$. Here, $\epsilon > 0$ is the perturbation budget of the adversary and $B_{\epsilon}(x) = \{ x' \in \mathcal{X} \mid \| x' - x \|_p \le \epsilon \}$ denotes the closed $\ell_p$-ball of radius $\epsilon$ centered at $x$. Throughout this paper, we consider $p \in \{2, \infty\}$, the most common settings in adversarial machine learning literature. If the classifier gives robust predictions in the neighborhood of $x_o$, then we say $x_o$ is robust against adversarial perturbations.

We can extend this notion of adversarial robustness to the whole image space $\mathcal{X}$. To do this, we define the \emph{robust region} of a classifier $c$ as the set of images that $c$ can output robust predictions in the presence of adversarial perturbations.
\begin{definition}[$\epsilon$-robust region]
	Let $c: \mathcal{X} \to \mathcal{Y}$ be a classifier and $\epsilon > 0$ be the perturbation budget of an adversary. The $\epsilon$-robust region of the classifier $c$ is defined by $ R_{\epsilon} (c) \coloneqq \{ x \in \mathcal{X} \mid c(x') = c(x), ~\forall x' \in B_{\epsilon}(x) \}$.
\end{definition}

Now, consider a defender who can preemptively manipulate $x_o$ under a small modification budget $\delta > 0$ to generate a new image $x_r \in B_{\delta}(x_o)$, and a grey-box adversary who aims to find an adversarial example near $x_r$. Then, the defender's optimal strategy against the adversary is to make $x_r$ be correctly classified as $y_o$ and locate in the robust region $R_{\epsilon}(c)$ so that $x_r$ is robust to adversarial perturbations. If both of these two conditions are satisfied, we say $x_o$ is \emph{preemptively robust} against the grey-box adversary and $x_r$ is a \emph{preemptively robustified image} of $x_o$.
\begin{definition}[Preemptive robustness, grey-box]
	Let $c: \mathcal{X} \to \mathcal{Y}$ be a classifier and $\delta, \epsilon > 0$ be the modification budgets of the defender and the grey-box adversary, respectively. An original image $x_o$ with its class $y_o$ is preemptively robust against the grey-box adversary if there exists $x_r \in B_\delta(x_o)$ such that (\lowercase\expandafter{\romannumeral1}) $c(x_r) = y_o$ and (\lowercase\expandafter{\romannumeral2}) $x_r \in R_{\epsilon}(c)$.
	\label{def:preemptive}
\end{definition}

Next, we consider a white-box adversary against the defender. Let us denote the manipulation algorithm of the defender by $m: \mathcal{X} \times \mathcal{Y} \to \mathcal{X}$, \ie, $x_r = m(x_o, y_o)$. Then, the white-box adversary will adaptively design its attack algorithm $a_m: \mathcal{X} \times \mathcal{Y} \to \mathcal{X}$, which takes $x_r$ and $y_o$ as inputs and produces a candidate of the adversarial example. For the output $a_m(x_r, y_o)$ to be a valid adversarial example, it should be misclassified and located in $B_{\epsilon}(x_o)$. If the output is not a valid adversarial example, we say $x_o$ is \emph{preemptively robust} against the white-box adversary.
\begin{definition}[Preemptive robustness, white-box]
	Let $m: \mathcal{X} \times \mathcal{Y} \to \mathcal{X}$ be the defender's manipulation algorithm and $a_m: \mathcal{X} \times \mathcal{Y} \to \mathcal{X}$ be the adaptive attack algorithm of the white-box adversary. Given an original image $x_o$ and its class $y_o$, let $x_r = m(x_o, y_o)$ denote the resulting image of the defender's algorithm. Then, $x_o$ is called preemptively robust against the white-box adversary if either of the following conditions is satisfied: (\lowercase\expandafter{\romannumeral1}) $c(a_m(x_r, y_o)) = y_o$ or (\lowercase\expandafter{\romannumeral2}) $a_m(x_r,  y_o) \notin B_{\epsilon}(x_o)$. 
\end{definition}

Note that since the white-box adversary does not have any information about the original $x_o$ in the MitM setting, forcing $a_m(x_r, y_o)$ to lie in $B_\epsilon(x_o)$ is a non-trivial task for the adversary.

\subsection{Preemptive Robustification Algorithm}\label{sec:safe_spot}

In this subsection, we develop an algorithm for preemptively robustifying original images against the grey-box adversary. Given a classifier $c$, finding a preemptively robustified image $x_r$ from an original image $x_o$ can be formulated as the following optimization problem, which is directly from \Cref{def:preemptive}:
\begin{align*}
	& \minimize_{x_r} ~~\: \mathds{1}_{c(x_r) \neq y_o} + \mathds{1}_{x_r \notin R_{\epsilon}(c)} \\
	& \: \mathrm{subject~to} ~~ \|x_r - x_{o}\|_{p} \le \delta,
\end{align*}
where $\mathds{1}$ is the 0-1 loss function. 

Note that in this formulation, the defender requires the ground-truth label $y_o$. However, images in real-world applications are usually unlabeled unless users manually annotate their images. Therefore, it is natural to assume that the defender does not have access to the ground-truth label $y_o$. In this case, we utilize the classifier's prediction $c(x_o)$ instead of $y_o$:
\begin{align*}
	& \minimize_{x_r} ~~\: \mathds{1}_{c(x_r) \neq c(x_o)} + \mathds{1}_{x_r \notin R_{\epsilon}(c)} \\
	& \: \mathrm{subject~to} ~~ \|x_r - x_o\|_{p} \le \delta.
\end{align*}
As $x_r \notin R_{\epsilon}(c)$ implies there exists an adversarial example $x_r^a \in B_{\epsilon}(x_r)$ such that $c(x_r^a) \neq c(x_r)$, we can reformulate the optimization problem as
\begin{align*}
	& \minimize_{x_r} ~~\: \mathds{1}_{c(x_r) \neq c(x_o)} + \sup_{x_r^a} ~ \mathds{1}_{c(x_r^a) \neq c(x_r)} \\
	& \: \mathrm{subject~to} ~~ \|x_r - x_o\|_{p} \le \delta ~~ \mathrm{and} ~~ \|x_r^a - x_r\|_{p} \le \epsilon.
\end{align*}
Since 0-1 loss is not differentiable, we use the cross-entropy loss $\ell: \mathcal{X} \times \mathcal{Y} \to \mathbb{R}^{+}$ of the classifier $c$ as the convex surrogate loss function:
\begin{align}
	\label{eqn:safe_spot_naive}
	& \minimize_{x_r} ~~\: \ell(x_r, c(x_o)) + \sup_{x_r^a} ~ \ell(x_r^a, c(x_r)) \\
	& \: \mathrm{subject~to} ~~ \|x_r - x_o\|_{p} \le \delta ~~ \mathrm{and} ~~ \|x_r^a - x_r\|_{p} \le \epsilon \nonumber.
\end{align}

Let $h(x_r)$ denote the objective in \Cref{eqn:safe_spot_naive}. Instead of minimizing $h(x_r)$ directly, we minimize $\tilde{h}(x_r) = \supx_{x_r^a \in B_{\epsilon}(x_r)} \ell(x_r^a, c(x_o))$ since it upper bounds $h(x_r)$ when sufficiently minimized due to \Cref{lem:upper}.
\begin{lemma}
	\label{lem:upper}
	If $\tilde{h}(x_r) \le -\log(0.5) \simeq 0.6931$, then $h(x_r) \le 2 \tilde{h}(x_r)$.
\end{lemma}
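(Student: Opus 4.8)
The plan is to exploit the explicit form of the cross-entropy loss to convert the hypothesis $\tilde{h}(x_r) \le -\log(0.5)$ into a statement about the classifier's \emph{predictions} on the whole ball $B_\epsilon(x_r)$, and then compare $h$ and $\tilde{h}$ term by term. Writing $\ell(x, y) = -\log p_y(x)$, where $p_y(x)$ denotes the softmax confidence the network assigns to label $y$ on input $x$, the key is that a confidence of at least $0.5$ pins down the argmax, which forces the prediction to be constant on $B_\epsilon(x_r)$ and in particular makes $c(x_r) = c(x_o)$. Once the two cross-entropy terms in $h$ share the same label $c(x_o)$, the bound is immediate.

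Concretely, I would proceed in three steps. First I would unpack the assumption: $\tilde{h}(x_r) \le -\log(0.5)$ means $\sup_{x_r^a \in B_\epsilon(x_r)} \bigl(-\log p_{c(x_o)}(x_r^a)\bigr) \le -\log(0.5)$, which, after exponentiating and using monotonicity of $\log$, is equivalent to $p_{c(x_o)}(x_r^a) \ge 0.5$ for every $x_r^a \in B_\epsilon(x_r)$. Second, I would translate this into prediction invariance: since the softmax probabilities sum to one, the mass on the remaining labels satisfies $\sum_{k \neq c(x_o)} p_k(x_r^a) \le 0.5 \le p_{c(x_o)}(x_r^a)$, so $c(x_o)$ attains the maximum and the predicted label is $c(x_r^a) = c(x_o)$ throughout $B_\epsilon(x_r)$; evaluating at the feasible point $x_r^a = x_r$ gives in particular $c(x_r) = c(x_o)$. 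Third, I would substitute $c(x_r) = c(x_o)$ into the objective $h(x_r) = \ell(x_r, c(x_o)) + \sup_{x_r^a \in B_\epsilon(x_r)} \ell(x_r^a, c(x_r))$. The supremum term then becomes $\sup_{x_r^a \in B_\epsilon(x_r)} \ell(x_r^a, c(x_o)) = \tilde{h}(x_r)$, and the first term is bounded by $\ell(x_r, c(x_o)) \le \sup_{x_r^a \in B_\epsilon(x_r)} \ell(x_r^a, c(x_o)) = \tilde{h}(x_r)$, because $x_r$ itself lies in $B_\epsilon(x_r)$. Adding the two contributions yields $h(x_r) \le 2\tilde{h}(x_r)$.

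The step I expect to be the only delicate point is the passage from ``$p_{c(x_o)} \ge 0.5$'' to ``$c(x_o)$ is the predicted class,'' since the threshold $-\log(0.5)$ is exactly tight: when $p_{c(x_o)} = 0.5$ precisely, a second label could also carry probability $0.5$, producing a tie in the argmax. I would address this by observing that the remaining-mass computation still forces every competing label to have probability $\le p_{c(x_o)}$, so $c(x_o)$ is \emph{always} among the maximizers; under the standard tie-breaking convention (resolving ties in favor of $c(x_o)$, or equivalently treating the boundary case as having strict dominance off a measure-zero set) the prediction equals $c(x_o)$ and the argument goes through. Everything else is a direct monotonicity manipulation of the logarithm and the elementary fact that the value of a function at a feasible point does not exceed its supremum over the feasible set.
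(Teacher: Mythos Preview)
Your proposal is correct and follows essentially the same route as the paper: use $x_r \in B_\epsilon(x_r)$ to get $\ell(x_r,c(x_o)) \le \tilde h(x_r) \le -\log(0.5)$, deduce from the softmax bound that $c(x_r)=c(x_o)$, and then substitute to obtain $h(x_r)\le 2\tilde h(x_r)$. The paper's argument is slightly leaner in that it only establishes $c(x_r)=c(x_o)$ at the single point $x_r$ (rather than prediction invariance on the whole ball) and does not discuss the tie at $p_{c(x_o)}=0.5$, but otherwise the steps are identical.
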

\begin{proof}
	See Supplementary A.1.
\end{proof}
Finally, we have the following optimization problem:
\begin{align}
	\label{eqn:safe_spot}
	& \minimize_{x_r} ~~\: \sup_{x_r^a} ~ \ell(x_r^a, c(x_o)) \\
	& \: \mathrm{subject~to} ~~ \|x_r - x_o\|_p \le \delta ~~ \mathrm{and} ~~ \|x_r^a - x_r\|_p \le \epsilon \nonumber.
\end{align}

\begin{algorithm}[t]
	\caption{Preemptive robustification algorithm}
	\label{alg:safe_spot}
	\begin{algorithmic}[0]
		\INPUT Original image and its prediction $(x_{o}, c(x_{o}))$
		\STATE $x_r = x_o$ \textit{// or randomly initialized in $B_\delta(x_o)$}
		\FOR{$i=1, \ldots, \text{MAXITER}$}
		\STATE \textit{// Generate $N$ PGD adversarial examples}
		\FOR{$n=1, \ldots, N$}
		\STATE $x_{r, n}^a = x_r + \eta$ where $\eta \sim \Ucal(B_\epsilon(0))$
		\FOR{$t=1, \ldots, T$}
		\STATE$ x_{r, n}^a \leftarrow \Pi_{x_r, \epsilon} \left( f(x_{r, n}^a; c(x_o), \ell) \right)$
		\ENDFOR
		\ENDFOR
		\STATE \textit{// Update image} \\
		$x_r \leftarrow \Pi_{x_{o}, \delta} \left( x_r - \beta \cdot \dfrac{1}{N}\sum\limits_{n=1}^N \dfrac{\partial \ell(x_{r, n}^a, c(x_o))}{\partial x_r} \right)$
		\ENDFOR
		\OUTPUT $x_r$
	\end{algorithmic}
\end{algorithm}

To solve \Cref{eqn:safe_spot}, we first approximate the inner maximization problem by running $T$-step PGD \citep{pgd} whose dynamics is given by
\begin{align*}
	x_r^{a, 0} &= x_r + \eta \tag{random start} \\
	\tilde{x}_r^{a, t} &= f\left(x_r^{a, t-1}; c(x_{o}), \ell\right) \tag{adversarial update} \\
	x_r^{a, t} &= \Pi_{x_r, \epsilon} \left( \tilde{x}_r^{a, t} \right), \tag{projection}
\end{align*}
where $\eta$ is a noise uniformly sampled from $B_{\epsilon}(0)$, $f$ is FGSM \citep{fgsm} defined as
\begin{align*}
	f(x; y, \ell) = \begin{dcases} 
		x + \alpha \cdot \sgn \left( \nabla_x \ell(x, y) \right) & \text{if} ~~ p=\infty \\
		x + \alpha \cdot \frac{\nabla_x \ell(x, y)}{\left\| \nabla_x \ell(x, y) \right\|_2} & \text{if} ~~ p=2,
	\end{dcases}
\end{align*}
and $\Pi_{x_r, \epsilon}$ is a projection operation onto $B_{\epsilon}(x_r)$. Then, we iteratively solve the approximate problem given by replacing $x_r^a$ to $x_r^{a, T}$ in \Cref{eqn:safe_spot}. To update $x_r$, we compute the gradient of $\ell(x_r^a, c(x_o))$ with respect to $x_r$ expressed as
\begin{align*}
	& \frac{\partial \ell\left(x_r^a, c(x_{o})\right)}{\partial x_r} = \\
	& \quad ~~ \nabla_x \tilde{f}\left(x_r^{a, 0}\right)^\intercal \cdot \cdots \cdot \nabla_x \tilde{f}\left(x_r^{a, T-1}\right)^\intercal \cdot \nabla_x \ell\left(x_r^{a, T}, c(x_o)\right),
\end{align*}
where $\nabla_x \tilde{f}$ is the Jacobian matrix of $ \tilde{f} = \Pi_{x_r, \epsilon} \circ f$ which can be computed via back-propagation. After computing the gradient, we update $x_r$ by projected gradient descent method:
\begin{align*}
	x_r \leftarrow \Pi_{x_o, \delta} \left( x_r - \beta \cdot \frac{\partial \ell(x_r^a, c(x_o))}{\partial x_r} \right).
\end{align*}
Note that $\ell(x_r^a, c(x_{o}))$ is a random variable dependent on $\eta$. Therefore, we generate $N$ adversarial examples $\{x_{r, n}^a\}_{n=1}^N$ with different noises and optimize the sample mean of the losses instead. \Cref{alg:safe_spot} shows the overall preemptive robustification algorithm and \Cref{fig:safe_spot_finding} illustrates the optimization process. Some examples of robustified images generated from our algorithm are shown in Supplementary D.

\begin{figure}[t]
	\centering
	\includegraphics[width=0.95\columnwidth]{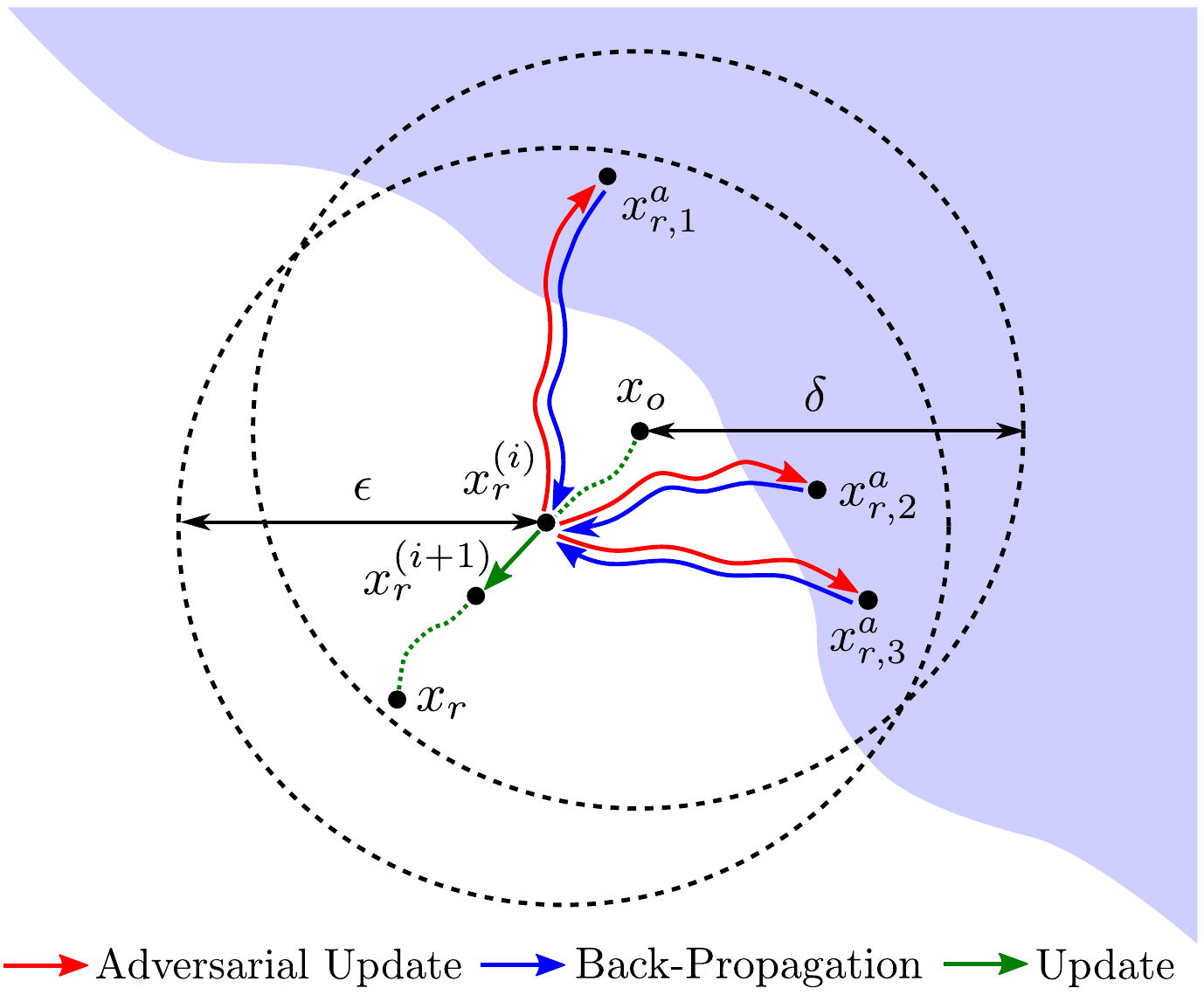}
	\caption{Illustration of the preemptive robustification process. The shaded region represents the set of misclassified points.}
	\label{fig:safe_spot_finding}
\end{figure}

\subsection{Computing Update Gradient without Second-Order Derivatives}
Computing the update gradient with respect to $x_r$ involves the use of second-order derivatives of the loss function $\ell$ since the dynamics $f$ contains the loss gradient $\nabla_x \ell$. Standard deep learning libraries, such as PyTorch \citep{pytorch}, support the computation of these higher-order derivatives. However, it imposes a huge memory burden as the size of the computational graph increases. Furthermore, when $p = 2$, computing the update gradient with the second-order derivatives might cause an \emph{exploding gradient problem} if the loss gradient vanishes by \Cref{lemma:jacobian} and \Cref{prop:jacobian}.

\begin{lemma}
	\label{lemma:jacobian}
	Suppose $\ell$ is twice-differentiable and its second partial derivatives are continuous. If $p=2$, the Jacobian of the dynamics $f$ is
	\begin{align*} 
		\nabla_x f = I + \alpha \cdot \left(I - \left(\frac{g}{\| g \|_2}\right)\left(\frac{g}{\| g \|_2}\right)^\intercal \right) \frac{H}{\| g \|_2},
	\end{align*}
	where $g = \nabla_{x} \ell$ and $H = \nabla_{x}^2 \ell$.
\end{lemma}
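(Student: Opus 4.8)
The plan is to differentiate $f(x) = x + \alpha\, g/\|g\|_2$ (with $g = \nabla_x\ell$) directly and then recognize the result as the stated projection form. The identity map $x \mapsto x$ contributes the $I$ term, so the whole task reduces to computing the Jacobian of the normalized gradient $u(x) := g(x)/\|g(x)\|_2$ and scaling by $\alpha$.

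First I would assemble two elementary ingredients. The Jacobian of the vector-valued map $g$ is the Hessian, $\partial g_k/\partial x_j = H_{kj}$; here the lemma's hypothesis that $\ell$ is twice-differentiable with continuous second partial derivatives is exactly what is needed, via Schwarz's theorem, to identify $\nabla_x g$ with the symmetric matrix $H = \nabla_x^2\ell$. Second, applying the chain rule to $\|g\|_2 = \sqrt{g^\intercal g}$ gives the scalar gradient $\nabla_x \|g\|_2 = Hg/\|g\|_2$, that is, $\partial\|g\|_2/\partial x_j = \frac{1}{\|g\|_2}\sum_k g_k H_{kj}$. With these in hand, differentiating $u_i = g_i/\|g\|_2$ by the quotient rule yields the entrywise expression
\[
  \frac{\partial u_i}{\partial x_j} = \frac{H_{ij}}{\|g\|_2} - \frac{g_i \sum_k g_k H_{kj}}{\|g\|_2^{3}}.
\]

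Finally I would reassemble this into matrix form. Recognizing $\sum_k g_k H_{kj} = (g^\intercal H)_j$ shows the subtracted term is the $(i,j)$ entry of the outer-product matrix $g g^\intercal H$, so $\nabla_x u = H/\|g\|_2 - g g^\intercal H/\|g\|_2^{3} = \frac{1}{\|g\|_2}\bigl(I - gg^\intercal/\|g\|_2^2\bigr)H$. Writing $gg^\intercal/\|g\|_2^2 = (g/\|g\|_2)(g/\|g\|_2)^\intercal$ and adding back the identity from the $x$-term gives $\nabla_x f = I + \alpha\bigl(I - (g/\|g\|_2)(g/\|g\|_2)^\intercal\bigr)H/\|g\|_2$, as claimed. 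There is no genuine obstacle here; the only care needed is matrix-calculus bookkeeping — keeping gradients as column vectors, identifying the Jacobian of $g$ with $H$, and spotting the factorization of $H/\|g\|_2 - gg^\intercal H/\|g\|_2^{3}$ into the projection $I - (g/\|g\|_2)(g/\|g\|_2)^\intercal$ times $H/\|g\|_2$, which is where the outer product $gg^\intercal$ does all the work. I would note explicitly that symmetry of $H$ is invoked only to write the Jacobian of $g$ as $H$; the factorization step itself does not rely on it.
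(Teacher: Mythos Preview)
Your proposal is correct and follows essentially the same route as the paper's own proof: both compute the Jacobian of $g/\|g\|_2$ via the quotient rule, obtain $\nabla_x\|g\|_2 = Hg/\|g\|_2$, and then factor $H/\|g\|_2 - gg^\intercal H/\|g\|_2^{3}$ into the projection form. The only cosmetic difference is that you work entrywise before reassembling into matrices, whereas the paper stays in matrix notation throughout and invokes $H^\intercal = H$ at the factorization step rather than when identifying the Jacobian of $g$.
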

\begin{proof} 
	See Supplementary B.1.
\end{proof}

\begin{proposition}
	\label{prop:jacobian}
	If the maximum eigenvalue of $H$ in absolute value is $\sigma$, then 
	\begin{align*}
		\left\| \nabla_x \tilde{f}^\intercal \cdot a \right\|_2 \le \left( 1 + \alpha \cdot \dfrac{\sigma}{\| g \|_2} \right) \| a \|_2.
	\end{align*}
\end{proposition}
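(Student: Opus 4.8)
The plan is to reduce the stated bound to two independent ingredients: the non-expansiveness of the Euclidean projection $\Pi_{x_r,\epsilon}$, and the operator-norm bound on the adversarial-update Jacobian $\nabla_x f$ that \Cref{lemma:jacobian} already supplies. Throughout, I write $\|\cdot\|_{\mathrm{op}}$ for the spectral (operator) norm induced by $\|\cdot\|_2$, and I recall that $\|M\|_{\mathrm{op}} = \|M^\intercal\|_{\mathrm{op}}$ and that $\|MN\|_{\mathrm{op}} \le \|M\|_{\mathrm{op}}\|N\|_{\mathrm{op}}$.

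First I would apply the chain rule to $\tilde{f} = \Pi_{x_r,\epsilon} \circ f$, giving $\nabla_x \tilde{f} = \nabla_x \Pi_{x_r,\epsilon} \cdot \nabla_x f$, with the projection Jacobian evaluated at $f(x)$, and hence $\nabla_x \tilde{f}^\intercal = \nabla_x f^\intercal \cdot \nabla_x \Pi_{x_r,\epsilon}^\intercal$. Applying this to the vector $a$ and using sub-multiplicativity yields
\[ \left\| \nabla_x \tilde{f}^\intercal \cdot a \right\|_2 \le \left\| \nabla_x f^\intercal \right\|_{\mathrm{op}} \cdot \left\| \nabla_x \Pi_{x_r,\epsilon}^\intercal \cdot a \right\|_2 \le \left\| \nabla_x f \right\|_{\mathrm{op}} \cdot \left\| \nabla_x \Pi_{x_r,\epsilon}^\intercal \cdot a \right\|_2, \]
so the problem splits into bounding the projection factor by $1$ and the update factor by $1 + \alpha\,\sigma/\|g\|_2$.

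For the update factor I would substitute the expression from \Cref{lemma:jacobian}. Writing $P = I - (g/\|g\|_2)(g/\|g\|_2)^\intercal$, the triangle inequality gives $\|\nabla_x f\|_{\mathrm{op}} \le 1 + \alpha\,\|P\|_{\mathrm{op}}\,\|H\|_{\mathrm{op}}/\|g\|_2$. Since $P$ is an orthogonal projection (onto the complement of $g$) its eigenvalues lie in $\{0,1\}$, so $\|P\|_{\mathrm{op}} \le 1$; and since $H = \nabla_x^2 \ell$ is symmetric, its spectral norm equals its largest absolute eigenvalue $\sigma$. This produces exactly the claimed factor $1 + \alpha\,\sigma/\|g\|_2$.

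The main obstacle is certifying that the projection contributes a factor at most $1$, i.e.\ that $\|\nabla_x \Pi_{x_r,\epsilon}^\intercal \cdot a\|_2 \le \|a\|_2$. The clean route is that $B_\epsilon(x_r)$ is closed and convex, so the Euclidean projection onto it is non-expansive ($1$-Lipschitz), whence its Jacobian has operator norm at most $1$ wherever it exists, and transposition preserves this. The care needed is that the $\ell_2$-ball projection is differentiable only off the boundary sphere: inside the ball it is the identity, and outside it is the radial map $y \mapsto x_r + \epsilon\,(y - x_r)/\|y - x_r\|_2$, whose Jacobian is $\tfrac{\epsilon}{\|y - x_r\|_2}\bigl(I - \hat u \hat u^\intercal\bigr)$ with $\hat u$ a unit vector and $\|y - x_r\|_2 \ge \epsilon$ in that regime. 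Both explicit forms have operator norm at most $1$, so the bound holds wherever the composed Jacobian $\nabla_x \tilde{f}$ is defined, which is all that the statement requires.
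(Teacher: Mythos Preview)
Your proposal is correct and follows essentially the same route as the paper: factor $\tilde f = \Pi_{x_r,\epsilon}\circ f$, bound the $f$-factor via \Cref{lemma:jacobian} together with $\|P\|_{\mathrm{op}}\le 1$ and $\|H\|_{\mathrm{op}}=\sigma$, and bound the projection factor by $1$. The only notable difference is in handling $\Pi_{x_r,\epsilon}$: the paper writes $\Pi_{x_r,\epsilon}(x)=k(x-x_r)+x_r$ with $0<k\le 1$ and takes its Jacobian to be $kI$, whereas you correctly observe that outside the ball the scaling $k=\epsilon/\|y-x_r\|_2$ depends on $y$, yielding the Jacobian $k\bigl(I-\hat u\hat u^\intercal\bigr)$ rather than $kI$; your treatment is therefore a bit more careful, though both versions deliver the needed operator-norm bound of $1$.
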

\begin{proof}
	See Supplementary B.2.
\end{proof}

As we update $x_r$, the loss gradients $g$ of $x_r$ and its intermediate adversarial examples $x_r^{a, t}$ reduce to zero, which might cause the update gradient to explode and destabilize the update process. To address this problem, we approximate the update gradient by excluding the second-order derivatives, following the practice in \citet{maml}. We also include an experiment in comparison to using the exact update gradient in Supplementary B.3. For the case of $p=\infty$, the second-order derivatives naturally vanish since we take the sign of the loss gradient $\nabla_x \ell(x, y)$. Therefore, the approximate gradient is equal to the exact update gradient.

\subsection{Network Training Scheme for Improving Preemptive Robustness}\label{sec:safe_adv_train}

So far, we have explored how the defender can preemptively robustify the original image, given a pre-trained classifier. Now, we explore the defender's network training scheme for a classifier where data points are preemptively robust with high probability. Suppose the defender has a labeled training set, which is drawn from a true data distribution $\Dcal$. To induce data points to be preemptively robust, the defender's optimal training objective should have the following form:
\begin{align*}
	&\minimize_{\theta} ~~\: \mathop{\mathbb{E}}_{(x_o, y_o) \sim \Dcal} ~ \left[ \ell(\hat{x}_r^a, y_o; \theta) \right] \\
	&\: \mathrm{subject~to}  ~~~~~~~~ \hat{x}_r^a = \argmax_{x_r^a \in B_{\epsilon}(\hat{x}_r)} ~ \ell (x_r^a, y_o) \\
	&\: \qquad\qquad\qquad ~\,  \hat{x}_r = \argmin_{x_r \in B_{\delta}(x_o)} ~ \sup_{x_r^a \in B_{\epsilon}(x_r)} ~ \ell (x_r^a, y_o),
\end{align*}
where $\theta$ is the set of trainable parameters. Concretely, the defender first attempts to craft a candidate for preemptively robustified points $\hat{x}_r$ of the original data point $x_o$. Then, the defender generates an adversarial example $\hat{x}_r^a$ of $\hat{x}_r$ and minimizes its cross-entropy loss $\ell(\hat{x}_r^a, y_o; \theta)$ so that $\hat{x}_r$ becomes resistant to adversarial perturbations. Note that the ground-truth label $y_o$ is used instead of the prediction $c(x_o)$, since we assume the ground-truth label of the training set is given.

The most direct way to optimize the objective would be to find $\hat{x}_r$ from $x_o$ using our preemptive robustification algorithm and perform $K$-step PGD adversarial training \citep{pgd} with $\hat{x}_r$. However, since our algorithm requires running $T$-step PGD dynamics per each update, the proposed training procedure would be more computationally demanding than standard PGD adversarial training. To ease this problem, we replace the inner maximization $\sup_{x_r^a} \ell(x_r^a, y_o)$ in the preemptive robustification process by $\ell(x_r, y_o)$:
\begin{align*}
	\hat{x}_r & = \argmin_{x_r \in B_{\delta}(x_o)} ~ \sup_{x_r^a \in B_{\epsilon}(x_r)} ~ \ell (x_r^a, y_o) \\
	\Longrightarrow ~ \hat{x}_r & = \argmin_{x_r \in B_{\delta}(x_o)} ~ \ell (x_r, y_o).
\end{align*}

Then, $\hat{x}_r$ can be easily computed by running $L$-step PGD on $x_o$ towards minimizing the cross-entropy loss. We denote this training scheme as \emph{preemptively robust training}. The full training procedure is summarized in \Cref{alg:safe_spot_aware} (differences with the standard adversarial training marked in blue).

Note that the standard adversarial training is a specific case of our preemptively robust training, forcing training data $x_o$ to be far from the decision boundary. However, recent work demonstrates there is a trade-off between the classification error and the boundary error, which is why standard adversarial training significantly decreases the clean accuracy \cite{odds, trades}. In contrast, our proposed training scheme allows original images $x_o$ to lie near the decision boundary and only enforces the preemptively robustified images $\hat{x}_r$ to be distant from the boundary. Our experiments in \Cref{sec:exp} show preemptively robust training is less prone to suffer from the clean accuracy drop due to this flexibility.

\begin{algorithm}[t]
	\caption{Preemptively robust training, $p=\infty$}
	\label{alg:safe_spot_aware}
	\begin{algorithmic}[0]
		\INPUT Training dataset $\mathcal{D}_{train}$, maximum epoch $N$
		\FOR{$n = 1, \ldots, N$}
		\FOR{$(x_o, y_o) \in \mathcal{D}_{train}$}
		\STATE \blue{\textit{// Do $L$-step PGD towards minimizing loss}}\\
		\blue{
		$x_r = x_o + \eta$ where $\eta \sim B_{\delta}(0)$
		\FOR{$l=1, \ldots, L$}
		\STATE $x_r \leftarrow \Pi_{x_o, \delta} \left( x_r - \beta \cdot \mathrm{sgn}(\nabla_x\ell(x_r, y_o)) \right)$
		\ENDFOR
		}
		\STATE \textit{// Do $K$-step PGD towards maximizing loss}
		\STATE $x_r^a = x_r + \eta$ where $\eta \sim B_{\epsilon}(0)$
		\FOR{$t=1, \ldots, K$}		
		\STATE $x_r^a \leftarrow \Pi_{x_r, \epsilon} \left( x_r^a + \alpha \cdot \mathrm{sgn}(\nabla_x\ell(x_r^a, y_o)) \right)$ 
		\ENDFOR
		\STATE $\theta \leftarrow \theta - \nabla_{\theta} \ell(x_r^a, y_o)$
		\ENDFOR
		\ENDFOR
	\end{algorithmic}
\end{algorithm}

\subsection{Preemptive Robustification for Classifiers with Randomized Smoothing}

Our preemptive robustification algorithm can also be applied to smoothed classifiers. Given a base classifier $c:\Xcal \to \Ycal$, a smoothed classifier $\tilde{c} : \Xcal \to \Ycal$ is defined as
\begin{align*}
	\tilde{c}(x) = \argmax_{y \in \Ycal} ~ \mathbb{P} \left( c(x + \xi) = y \right),
\end{align*}
where $\xi \sim \Ncal(0, \sigma^2 I)$. Crafting adversarial examples $x_r^a$ for the smoothed classifier, which is necessary for approximating the inner maximization in \Cref{eqn:safe_spot}, is ill-behaved since the $\argmax$ operation is non-differentiable. To address this problem, we follow the practice in \citet{salman19} and approximate the smoothed classifier $\tilde{c}$ with the smoothed soft classifier $\tilde{C}: \Xcal \to P(\Ycal)$ defined as
\begin{align}
	\tilde{C}(x) = \mathop{\mathbb{E}}_{\xi \sim \Ncal(0, \sigma^2 I)} \left[ C(x + \xi) \right],
	\label{eqn:smoothing}
\end{align}
where $P(\Ycal)$ is the set of probability distribution over $\Ycal$ and $C: \Xcal \to P(\Ycal)$ is the soft version of the base classifier $c$ such that $\argmax_{y \in \Ycal} C(x)_y = c(x)$. Finally, the adversarial example $x_r^a$ can be found by maximizing the cross-entropy loss of $\tilde{C}$ instead:
\begin{align*}
	& \maximize_{x_r^a} ~ -\log \left( \tilde{C}(x_r^a)_{c(x_o)} \right) \\
	&\: \mathrm{subject~to} ~~ \| x_r^a - x_r \|_p \le \epsilon,
\end{align*}
which can be approximated by $T$-step randomized PGD \citep{salman19}, where $\xi$ is sampled $M$ times to compute the sample mean of \Cref{eqn:smoothing} at each step. By replacing the inner maximization problem in \Cref{eqn:safe_spot} with the randomized PGD, we can update $x_r$ in a similar process.

\begin{algorithm}[t]
	\caption{Original image reconstruction}
	\label{alg:recon}
	\begin{algorithmic}[0]
		\INPUT Preemptively robustified image and its prediction $(x_r, c(x_r))$
		\STATE $\hat{x}_{o} = x_r$
		\FOR{$i=1, \ldots, \text{MAXITER}$}
		\STATE \textit{// Generate $N$ adversarial examples}
		\FOR{$n=1, \ldots, N$}
		\STATE $\hat{x}_{o, n}^a = \hat{x}_{o} + \eta$ where $\eta \sim \Ucal(B_\epsilon(0))$
		\FOR{$t=1, \ldots, T$}
		\STATE$ \hat{x}_{o, n}^a \leftarrow \Pi_{\hat{x}_{o}, \epsilon} \left( f(\hat{x}_{o, n}^a; c(x_r), \ell) \right)$
		\ENDFOR
		\ENDFOR
		\STATE \textit{// Update image} \\
		$\hat{x}_{o} \leftarrow \Pi_{x_r, \delta} \left( \hat{x}_{o}+ \beta \cdot \dfrac{1}{N}\sum\limits_{n=1}^N \dfrac{\partial \ell( \hat{x}_{o, n}^a, c(x_r))}{\partial \hat{x}_{o}} \right)$
		\ENDFOR
		\OUTPUT $\hat{x}_{o}$
	\end{algorithmic}
\end{algorithm}

\begin{algorithm}[t]
	\caption{Adaptive white-box attack}
	\label{alg:white_box}
	\begin{algorithmic}[0]
		\INPUT Preemptively robustified image and its prediction $(x_r, c(x_r))$, target $y_o$
		\STATE \textit{// Reconstruct original image} \\
		$\hat{x}_{o} = \textsc{OriginalImageReconstruction}(x_r, c(x_r))$
		\STATE \textit{// Run standard attack algorithm on reconstructed image} \\
		$\hat{x}_{o}^a = \textsc{AttackAlgorithm}(\hat{x}_{o}, y_o, \epsilon')$
		\OUTPUT $\hat{x}_{o}^a$
	\end{algorithmic}
\end{algorithm}

\subsection{Adaptive Attack against Preemptive Robustification}\label{sec:white_box}

So far, we have developed preemptive defense strategies against the grey-box adversary. Now, we consider the white-box adversary described in \Cref{sec:setup}, which is aware that the given image $x_r$ has been preemptively modified and aims to craft an adversarial example near the original image $x_o$ that is unknown. The most direct way for the adversary to achieve this is to reconstruct $x_o$ from $x_r$ and apply standard attack algorithms (\eg, PGD) on the reconstructed image $\hat{x}_{o}$. Since we assume the adversary knows the detailed hyperparameter settings of the robustification algorithm, the adversary can leverage this information to approximate the inverse dynamics of the preemptive robustification process starting from $x_r$, as described in \Cref{alg:recon}. The only difference between this reconstruction algorithm and the preemptive robustification process is the initialization and the update direction, modified to suit the reconstruction objective. \Cref{alg:white_box} shows the overall procedure of the adaptive white-box attack. Note that the adversary may modify $\hat{x}_{o}$ with a budget smaller than $\epsilon$ to induce $\hat{x}_{o}^a \in B_{\epsilon}(x_o)$, considering the original image might not be reconstructed accurately.

\Cref{fig:dist} shows the proposed reconstruction algorithm performs well in terms of reconstruction error if the preemptive robustification algorithm starts from the original image itself. However, as we run the preemptive robustification algorithm starting from a random point in $B_{\delta}(x_o)$, the performance of the reconstruction algorithm degrades considerably. About 80\% of the reconstructed images locate near the boundaries of $\epsilon$-balls centered at original images, which shows the difficulty of reconstructing the original image. We also observe that most of the resulting white-box attack examples generated from the reconstructed images lie outside $B_{\epsilon}(x_o)$, which implies they are not valid adversarial examples.

\begin{figure}[t]
	\centering
	\includegraphics[width=0.95\columnwidth]{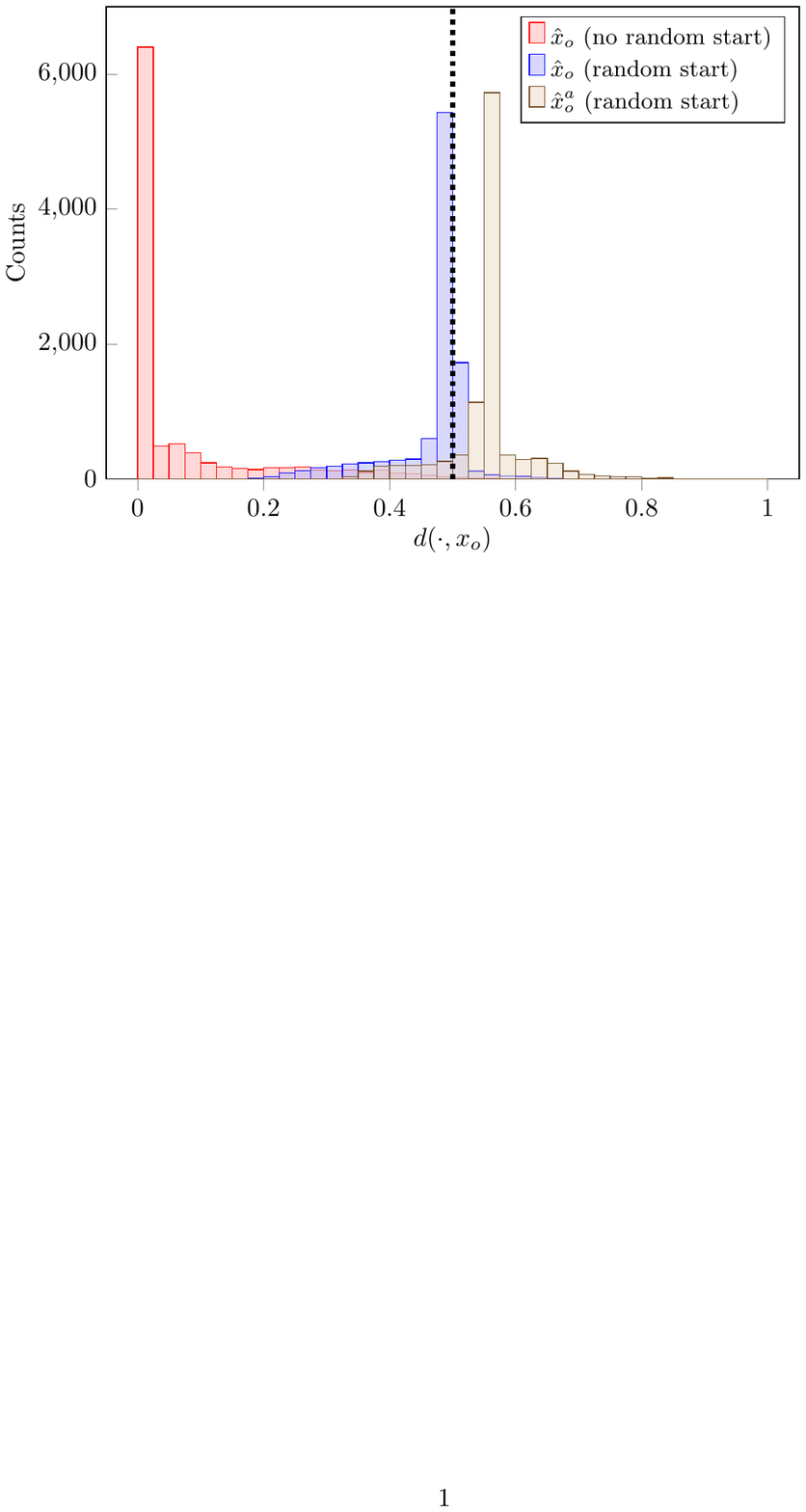}
	\caption{Histograms of the distances between original images $x_o$ and the reconstructed images $\hat{x}_{o}$ or their white-box attack examples $\hat{x}_{o}^a$ on CIFAR-10 test set. We use a preemptively robust model with $p=2$ and $\delta = \epsilon = 0.5$. The dotted line indicates the adversary's perturbation bound $\epsilon$.}
	\label{fig:dist}
\end{figure}

% Experiments
\section{Experiments}\label{sec:exp}

We evaluate our methods on CIFAR-10 and ImageNet by measuring classification accuracies of preemptively robustified images under the grey-box and white-box adversaries. As it is natural to assume that the defender and the adversary have the same modification budget, we set $\delta=\epsilon$ for all experiments. Both the adversaries use 20-step untargeted PGD and AutoAttack \cite{croce2020reliable} to find adversarial examples. For the white-box adversary, we sweep the final perturbation budget $\epsilon'$ and report the lowest accuracy measured. More details are listed in Supplementary C. The code is available online \footnote{\url{https://github.com/snu-mllab/preemptive_robustification}}.

\subsection{CIFAR-10}

We consider two types of perturbations: $\ell_\infty$ with $\epsilon=8/255$ and $\ell_2$ with $\epsilon=0.5$. To show the effectiveness of our robustification algorithm, we also report results without preemptive robustification (None)\footnote{In this case, the grey-box adversary is the same as the white-box adversary since no modification occurs on original images.}. As the baseline for our preemptively robust model, we use an adversarially trained model with early stopping \citep{rice20} (ADV).

The results in \Cref{tab:cifar_linf} and \Cref{tab:cifar_l2} show that our preemptive manipulation method successfully robustifies images compared to without the manipulation, achieving adversarial accuracies higher than 80\% in both the perturbation settings. Also, while the adversarially trained model does induce some images to be preemptively robust, our proposed network training method further boosts the performance of the robustified images in both the clean and adversarial accuracies.

Note that the white-box attacks are not effective on $\ell_\infty$ as they fail to lay their adversarial examples within the $B_\epsilon(x_o)$ ball. This is in part due to the characteristic of the $\ell_\infty$ distance measure, as $\ell_\infty$ distance can spike even when a single pixel deviates from the original value. On the other hand, white-box attacks on $\ell_2$ perturbations succeed to pose reasonable threats. However, the worst-case adversarial accuracy is still over 10\% higher than without our methods.

\begin{table}[ht]
	\centering
	\small
	\begin{tabular}{cc|ccccc}
		\toprule[1pt]
		\multirow{2}{*}{Model} & \multirow{2}{*}{Preempt.} & \multirow{2}{*}{Clean} & \multicolumn{2}{c}{Grey-box} & \multicolumn{2}{c}{White-box} \\
		& & & PGD & AA & PGD & AA \\
		\midrule
		ADV & None & 86.72 & 54.59 & 51.68 & 54.59 & 51.68 \\
		\midrule
		ADV & \textbf{Ours} & 86.72 & 86.23 & 81.70 & 86.72 & 86.72 \\
		\textbf{Ours} & \textbf{Ours} & \textbf{88.54} & \textbf{87.10} & \textbf{82.88} & \textbf{88.54} & \textbf{88.54} \\
		\bottomrule[1pt]
	\end{tabular}
	\caption{CIFAR-10 classification accuracy under grey-box and white-box adversaries with $\ell_\infty$ perturbation, $\epsilon=8/255$.}
	\label{tab:cifar_linf}
\end{table}

\begin{table}[ht]
	\centering
	\small
	\begin{tabular}{cc|ccccc}
		\toprule[1pt]
		\multirow{2}{*}{Model} & \multirow{2}{*}{Preempt.} & \multirow{2}{*}{Clean} & \multicolumn{2}{c}{Grey-box} & \multicolumn{2}{c}{White-box} \\
		& & & PGD & AA & PGD & AA \\
		\midrule
		ADV & None & 90.85 & 71.90 & 71.21 & 71.90 & 71.21  \\
		\midrule
		ADV & \textbf{Ours} & 90.85 & 84.81 & 83.56 & \textbf{85.12} & 79.48 \\
		\textbf{Ours} & \textbf{Ours} & \textbf{92.57} & \textbf{91.81} & \textbf{89.32} & 85.02 & \textbf{80.79} \\
		\bottomrule[1pt]
	\end{tabular}
	\caption{CIFAR-10 classification accuracy under grey-box and white-box adversaries with $\ell_2$ perturbation, $\epsilon=0.5$.}
	\label{tab:cifar_l2}
\end{table}

\subsection{ImageNet}\label{sec:imagenet}

We consider two types of perturbations: $\ell_\infty$ with $\epsilon=4/255$ and $\ell_2$ with $\epsilon=3.0$. As with the CIFAR-10 experiments, we compare our preemptive robustification algorithm to without the algorithm (None). As the baseline, we use a model adversarially trained with the fast training schemes \citep{fast} for computational efficiency (ADV).

\Cref{tab:imagenet_linf} and \Cref{tab:imagenet_l2} show our preemptive robustification methods scale to more practical datasets with bigger images. Our methods allow the natural images to be much more robust to adversarial attacks, with over 15\% higher worst-case adversarial accuracies, and at the same time maintains higher clean accuracies. We observe that similar to the CIFAR-10 experiments, the white-box attacks on the $\ell_\infty$ distance measure are not successful in finding appropriate adversarial samples.

\begin{table}[ht]
	\centering
	\small
	\begin{tabular}{cc|ccccc}
		\toprule[1pt]
		\multirow{2}{*}{Model} & \multirow{2}{*}{Preempt.} & \multirow{2}{*}{Clean} & \multicolumn{2}{c}{Grey-box} & \multicolumn{2}{c}{White-box} \\
		& & & PGD & AA & PGD & AA \\
		\midrule
		ADV & None & 56.24 & 32.03 & 27.52 & 32.03 & 27.52 \\
		\midrule
		ADV & \textbf{Ours} & 56.24 & 55.79 & 47.14 & 56.24 & 56.24 \\
		\textbf{Ours} & \textbf{Ours} & \textbf{61.01} & \textbf{59.66} & \textbf{48.24} & \textbf{61.01} & \textbf{61.01} \\
		\bottomrule[1pt]
	\end{tabular}
	\caption{ImageNet classification accuracy under grey-box and white-box adversaries with $\ell_\infty$ perturbation, $\epsilon=4/255$.}
	\label{tab:imagenet_linf}
\end{table}

\begin{table}[H]
	\centering
	\small	
	\begin{tabular}{cc|ccccc}
		\toprule[1pt]
		\multirow{2}{*}{Model} & \multirow{2}{*}{Preempt.} & \multirow{2}{*}{Clean} & \multicolumn{2}{c}{Grey-box} & \multicolumn{2}{c}{White-box} \\
		& & & PGD & AA & PGD & AA \\
		\midrule
		ADV & None & 54.99 & 32.07 & 27.58 & 32.07 & 27.58 \\
		\midrule
		ADV & \textbf{Ours} & 55.05 & 51.70 & 43.32 & 46.38 & 37.49 \\
		\textbf{Ours} & \textbf{Ours} & \textbf{61.60} & \textbf{58.13} & \textbf{43.60} & \textbf{54.23} & \textbf{47.54} \\
		\bottomrule[1pt]
	\end{tabular}
	\caption{ImageNet classification accuracy under grey-box and white-box adversaries with $\ell_2$ perturbation, $\epsilon=3.0$.}
	\label{tab:imagenet_l2}
\end{table}

\subsection{Randomized Smoothing}

We also evaluate our preemptive robustification algorithm for smoothed classifiers. We consider $\ell_2$ perturbations, where $\epsilon=0.5$ for CIFAR-10 and $\epsilon=3.0$ for ImageNet. We utilize a smoothed model trained with Gaussian noise augmentation as proposed in \citet{cohen19} due to its simplicity. We measure empirical adversarial accuracies using 20-step randomized PGD and its 10 restart version (PGD-10). We also compute the certified radii of the images and measure the certified adversarial accuracies.

\Cref{tab:smoothing_empirical_cifar} and \Cref{tab:smoothing_empirical_imagenet} shows the empirical robustness results against the randomized PGD. We observe our methods can significantly enhance preemptive robustness on the smoothed classifiers, maintaining 28\% and 49\% higher the worst-case adversarial accuracies than the baseline on CIFAR10 and ImageNet, respectively. The results in \Cref{tab:smoothing_certified} show our method also improves the certified robustness on the smoothed networks. Our methods achieve 22\% and 15\% higher certified accuracies on CIFAR10 and ImageNet, respectively.

\begin{table}[ht]
	\centering
	\small
	\begin{tabular}{c|ccccc}
		\toprule[1pt]
		\multirow{2}{*}{Preempt.} & \multirow{2}{*}{Clean} & \multicolumn{2}{c}{Grey-box} & \multicolumn{2}{c}{White-box} \\
		& & PGD & PGD-10 & PGD & PGD-10 \\ 
		\midrule
		None & 92.14 & 56.02 & 53.01 & 56.02 & 53.01 \\
		\textbf{Ours} & \textbf{92.35} & \textbf{91.37} & \textbf{89.98} & \textbf{82.06} & \textbf{80.71} \\
		\bottomrule[1pt]
	\end{tabular}
	\caption{CIFAR-10 empirical accuracy of smoothed network under grey-box and white-box adversaries with $\ell_2$ perturbation, $\epsilon=0.5$. We set the noise level to $\sigma=0.1$.}
	\label{tab:smoothing_empirical_cifar}
\end{table}

\begin{table}[H]
	\centering
	\small
	\begin{tabular}{c|ccccc}
		\toprule[1pt]
		 \multirow{2}{*}{Preempt.} & \multirow{2}{*}{Clean} & \multicolumn{2}{c}{Grey-box} & \multicolumn{2}{c}{White-box} \\
		& & PGD & PGD-10 & PGD & PGD-10 \\ 
		\midrule
		None & 69.93 & 9.61 & 8.61 & 9.61 & 8.61 \\
		\textbf{Ours} & \textbf{70.05} & \textbf{62.27} & \textbf{57.24} & \textbf{68.05} & \textbf{67.72} \\
		\bottomrule[1pt]
	\end{tabular}
	\caption{ImageNet empirical accuracy of smoothed network under grey-box and white-box adversaries with $\ell_2$ perturbation, $\epsilon=3.0$. We set the noise level to $\sigma=0.25$.}
	\label{tab:smoothing_empirical_imagenet}
\end{table}

\begin{table}[H]
	\centering
	\small
	\begin{tabular}{c|cc}
		\toprule[1pt]
		Preempt. & Clean & Cert. \\
		\midrule
		None & 82.84 & 55.58 \\
		\textbf{Ours} & \textbf{84.72} & \textbf{77.95} \\
		\bottomrule[1pt]
	\end{tabular}
	\hfil
	\begin{tabular}{c|cc}
		\toprule[1pt]
		Preempt. & Clean & Cert. \\
		\midrule
		None & 47.02 & 12.68 \\
		\textbf{Ours} & \textbf{52.66} & \textbf{27.89} \\
		\bottomrule[1pt]
	\end{tabular}
	\caption{CIFAR-10 (left) and ImageNet (right) certified accuracies of smoothed network with $\ell_2$ perturbation. The noise levels are $\sigma=0.25$ for CIFAR-10 and $\sigma=1.0$ for ImageNet.}
	\label{tab:smoothing_certified}
\end{table}

% Conclusion
\section{Conclusion}\label{sec:conclusion}

We consider a real-world adversarial framework where the MitM adversary intercepts and manipulates the images during transmission. To protect users from such attacks, we introduce a novel optimization algorithm for finding robust points in the vicinity of original images along with a new network training method suited for enhancing preemptive robustness. The experiments show that our algorithm can find such robust points for most of the correctly classified images. Further results show our method also improves preemptive robustness on smooth classifiers.

% Acknowledgement
\section*{Acknowledgement}

This work was supported in part by SNU-NAVER Hyperscale AI Center and Institute of Information \& Communications Technology Planning \& Evaluation (IITP) grant funded by the Korea government (MSIT) (No. 2020-0-00882, (SW STAR LAB) Development of deployable learning intelligence via self-sustainable and trustworthy machine learning and No. 2019-0-01371, Development of brain-inspired AI with human-like intelligence). This material is based upon work supported by the Air Force Office of Scientific Research under award number FA2386-20-1-4043. Hyun Oh Song is the corresponding author.

% Use \bibliography{yourbibfile} instead or the References section will not appear in your paper
\bibliography{aaai22}

\begin{thebibliography}{33}
\providecommand{\natexlab}[1]{#1}

\bibitem[{Cherepanova et~al.(2021)Cherepanova, Goldblum, Foley, Duan,
  Dickerson, Taylor, and Goldstein}]{cherepanova2021lowkey}
Cherepanova, V.; Goldblum, M.; Foley, H.; Duan, S.; Dickerson, J.; Taylor, G.;
  and Goldstein, T. 2021.
\newblock LowKey: leveraging adversarial attacks to protect social media users
  from facial recognition.
\newblock In \emph{ICLR}.

\bibitem[{Cohen, Rosenfeld, and Kolter(2019)}]{cohen19}
Cohen, J.~M.; Rosenfeld, E.; and Kolter, J.~Z. 2019.
\newblock Certified adversarial robustness via randomized smoothing.
\newblock In \emph{ICML}.

\bibitem[{Croce and Hein(2020)}]{croce2020reliable}
Croce, F.; and Hein, M. 2020.
\newblock Reliable evaluation of adversarial robustness with an ensemble of
  diverse parameter-free attacks.
\newblock In \emph{ICML}.

\bibitem[{Desmedt(2011)}]{desmedt2011man}
Desmedt, Y. 2011.
\newblock \emph{Man-in-the-Middle Attack}.
\newblock Springer.

\bibitem[{Finn, Abbeel, and Levine(2017)}]{maml}
Finn, C.; Abbeel, P.; and Levine, S. 2017.
\newblock Model-agnostic meta-learning for fast adaptation of deep networks.
\newblock In \emph{ICML}.

\bibitem[{Goodfellow, Shlens, and Szegedy(2015)}]{fgsm}
Goodfellow, I.~J.; Shlens, J.; and Szegedy, C. 2015.
\newblock Explaining and harnessing adversarial examples.
\newblock In \emph{ICLR}.

\bibitem[{He et~al.(2016)He, Zhang, Ren, and Sun}]{he2016}
He, K.; Zhang, X.; Ren, S.; and Sun, J. 2016.
\newblock Deep residual learning for image recognition.
\newblock In \emph{CVPR}.

\bibitem[{Krizhevsky and Hinton(2009)}]{cifar}
Krizhevsky, A.; and Hinton, G. 2009.
\newblock Learning multiple layers of features from tiny images.
\newblock Technical report, Citeseer.

\bibitem[{Lecuyer et~al.(2019)Lecuyer, Atlidakis, Geambasu, Hsu, and
  Jana}]{lecuyer19}
Lecuyer, M.; Atlidakis, V.; Geambasu, R.; Hsu, D.; and Jana, S. 2019.
\newblock Certified robustness to adversarial examples with differential
  privacy.
\newblock In \emph{IEEE Symposium on Security and Privacy (SP)}.

\bibitem[{Li et~al.(2019)Li, Neupane, Paul, Song, Krishnamurthy,
  Roy{-}Chowdhury, and Swami}]{li19video}
Li, S.; Neupane, A.; Paul, S.; Song, C.; Krishnamurthy, S.~V.; Roy{-}Chowdhury,
  A.~K.; and Swami, A. 2019.
\newblock Adversarial Perturbations Against Real-Time Video Classification
  Systems.
\newblock In \emph{NDSS}.

\bibitem[{Madry et~al.(2017)Madry, Makelov, Schmidt, Tsipras, and Vladu}]{pgd}
Madry, A.; Makelov, A.; Schmidt, L.; Tsipras, D.; and Vladu, A. 2017.
\newblock Towards deep learning models resistant to adversarial attacks.
\newblock In \emph{ICLR}.

\bibitem[{Oh, Fritz, and Schiele(2017)}]{oh2017adversarial}
Oh, S.~J.; Fritz, M.; and Schiele, B. 2017.
\newblock Adversarial image perturbation for privacy protection a game theory
  perspective.
\newblock In \emph{ICCV}.

\bibitem[{Paszke et~al.(2019)Paszke, Gross, Massa, Lerer, Bradbury, Chanan,
  Killeen, Lin, Gimelshein, Antiga et~al.}]{pytorch}
Paszke, A.; Gross, S.; Massa, F.; Lerer, A.; Bradbury, J.; Chanan, G.; Killeen,
  T.; Lin, Z.; Gimelshein, N.; Antiga, L.; et~al. 2019.
\newblock PyTorch: An imperative style, high-performance deep learning library.
\newblock In \emph{NeurIPS}.

\bibitem[{Raghunathan, Steinhardt, and Liang(2018)}]{raghunathan2018certified}
Raghunathan, A.; Steinhardt, J.; and Liang, P. 2018.
\newblock Certified defenses against adversarial examples.
\newblock In \emph{ICLR}.

\bibitem[{Rice, Wong, and Kolter(2020)}]{rice20}
Rice, L.; Wong, E.; and Kolter, J.~Z. 2020.
\newblock Overfitting in adversarially robust deep learning.
\newblock In \emph{ICML}.

\bibitem[{Russakovsky et~al.(2015)Russakovsky, Deng, Su, Krause, Satheesh, Ma,
  Huang, Karpathy, Khosla, Bernstein, Berg, and Fei-Fei}]{imagenet}
Russakovsky, O.; Deng, J.; Su, H.; Krause, J.; Satheesh, S.; Ma, S.; Huang, Z.;
  Karpathy, A.; Khosla, A.; Bernstein, M.; Berg, A.~C.; and Fei-Fei, L. 2015.
\newblock {ImageNet Large Scale Visual Recognition Challenge}.
\newblock In \emph{IJCV}.

\bibitem[{Salman et~al.(2020)Salman, Ilyas, Engstrom, Vemprala, Madry, and
  Kapoor}]{salman2020unadversarial}
Salman, H.; Ilyas, A.; Engstrom, L.; Vemprala, S.; Madry, A.; and Kapoor, A.
  2020.
\newblock Unadversarial Examples: Designing Objects for Robust Vision.
\newblock arXiv:2012.12235.

\bibitem[{Salman et~al.(2019)Salman, Li, Razenshteyn, Zhang, Zhang, Bubeck, and
  Yang}]{salman19}
Salman, H.; Li, J.; Razenshteyn, I.; Zhang, P.; Zhang, H.; Bubeck, S.; and
  Yang, G. 2019.
\newblock Provably robust deep learning via adversarially trained smoothed
  classifiers.
\newblock In \emph{NeurIPS}.

\bibitem[{Santurkar et~al.(2019)Santurkar, Tsipras, Tran, Ilyas, Engstrom, and
  Madry}]{santurkar19single}
Santurkar, S.; Tsipras, D.; Tran, B.; Ilyas, A.; Engstrom, L.; and Madry, A.
  2019.
\newblock Comoputer vision with a single (robust) classifier.
\newblock In \emph{NeurIPS}.

\bibitem[{{Shafahi} et~al.(2019){Shafahi}, {Najibi}, {Ghiasi}, {Xu},
  {Dickerson}, {Studer}, {Davis}, {Taylor}, and {Goldstein}}]{free}
{Shafahi}, A.; {Najibi}, M.; {Ghiasi}, A.; {Xu}, Z.; {Dickerson}, J.; {Studer},
  C.; {Davis}, L.; {Taylor}, G.; and {Goldstein}, T. 2019.
\newblock {Adversarial Training for Free!}
\newblock In \emph{NeurIPS}.

\bibitem[{Shan et~al.(2020)Shan, Wenger, Zhang, Li, Zheng, and
  Zhao}]{shan2020fawkes}
Shan, S.; Wenger, E.; Zhang, J.; Li, H.; Zheng, H.; and Zhao, B.~Y. 2020.
\newblock Fawkes: Protecting privacy against unauthorized deep learning models.
\newblock In \emph{29th USENIX Security Symposium (USENIX Security 20)}.

\bibitem[{Szegedy et~al.(2013)Szegedy, Zaremba, Sutskever, Bruna, Erhan,
  Goodfellow, and Fergus}]{szegedy13}
Szegedy, C.; Zaremba, W.; Sutskever, I.; Bruna, J.; Erhan, D.; Goodfellow, I.;
  and Fergus, R. 2013.
\newblock Intriguing properties of neural networks.
\newblock In \emph{ICLR}.

\bibitem[{Tieleman, Hinton et~al.(2012)}]{tieleman2012lecture}
Tieleman, T.; Hinton, G.; et~al. 2012.
\newblock Lecture 6.5-rmsprop: Divide the gradient by a running average of its
  recent magnitude.
\newblock \emph{COURSERA: Neural networks for machine learning}, 4(2): 26--31.

\bibitem[{Tsipras et~al.(2019)Tsipras, Santurkar, Engstrom, Turner, and
  Madry}]{odds}
Tsipras, D.; Santurkar, S.; Engstrom, L.; Turner, A.; and Madry, A. 2019.
\newblock Robustness may be at odds with accuracy.
\newblock In \emph{ICLR}.

\bibitem[{Wang et~al.(2019)Wang, Li, Wen, Nepal, and
  Xiang}]{wang19maliciousgen}
Wang, D.; Li, C.; Wen, S.; Nepal, S.; and Xiang, Y. 2019.
\newblock Man-in-the-Middle Attacks against Machine Learning Classifiers via
  Malicious Generative Models.
\newblock In \emph{IEEE Transactions on Dependable and Secure Computing}.

\bibitem[{Wong and Kolter(2018)}]{wong2018provable}
Wong, E.; and Kolter, Z. 2018.
\newblock Provable defenses against adversarial examples via the convex outer
  adversarial polytope.
\newblock In \emph{ICML}.

\bibitem[{Wong, Rice, and Kolter(2020)}]{fast}
Wong, E.; Rice, L.; and Kolter, J.~Z. 2020.
\newblock Fast is better than free: Revisiting adversarial training.
\newblock In \emph{ICLR}.

\bibitem[{Yang et~al.(2020)Yang, Duan, Hu, Salman, Razenshteyn, and
  Li}]{yang2020randomized}
Yang, G.; Duan, T.; Hu, J.~E.; Salman, H.; Razenshteyn, I.; and Li, J. 2020.
\newblock Randomized smoothing of all shapes and sizes.
\newblock In \emph{ICML}.

\bibitem[{Zagoruyko and Komodakis(2017)}]{zagoruyko2017wide}
Zagoruyko, S.; and Komodakis, N. 2017.
\newblock Wide Residual Networks.
\newblock arXiv:1605.07146.

\bibitem[{Zhang et~al.(2019{\natexlab{a}})Zhang, Zhang, Lu, Zhu, and
  Dong}]{yopo}
Zhang, D.; Zhang, T.; Lu, Y.; Zhu, Z.; and Dong, B. 2019{\natexlab{a}}.
\newblock You only propagate once: Accelerating adversarial training via
  maximal principle.
\newblock In \emph{NeurIPS}.

\bibitem[{Zhang and Wang(2019)}]{zhang2019defense}
Zhang, H.; and Wang, J. 2019.
\newblock Defense against adversarial attacks using feature scattering-based
  adversarial training.
\newblock In \emph{NeurIPS}.

\bibitem[{Zhang et~al.(2019{\natexlab{b}})Zhang, Yu, Jiao, Xing, Ghaoui, and
  Jordan}]{trades}
Zhang, H.; Yu, Y.; Jiao, J.; Xing, E.~P.; Ghaoui, L.~E.; and Jordan, M.~I.
  2019{\natexlab{b}}.
\newblock Theoretically Principled Trade-off between Robustness and Accuracy.
\newblock In \emph{ICML}.

\bibitem[{Zhang et~al.(2020)Zhang, Xu, Han, Niu, Cui, Sugiyama, and
  Kankanhalli}]{zhang2020fat}
Zhang, J.; Xu, X.; Han, B.; Niu, G.; Cui, L.; Sugiyama, M.; and Kankanhalli, M.
  2020.
\newblock Attacks Which Do Not Kill Training Make Adversarial Learning
  Stronger.
\newblock In \emph{ICML}.

\end{thebibliography}

\clearpage

\appendix

\section{Preemptive Robustification Algorithm}

\subsection{Proof of Lemma 1}
Since $x_r^a \in B_{\epsilon}(x_r)$, we have 
\begin{align}
	\label{eqn:ineq}
	\ell(x_r, c(x_o)) \le \sup_{x_r^a} ~ \ell(x_r^a, c(x_o)) = \tilde{h}(x) \le -\log(0.5).
\end{align}
Let $C(x_r)$ be the softmax probability of $x_r$. \Cref{eqn:ineq} implies $C(x_r)_{c(x_o)} \ge 0.5$, \ie, $c(x_r) = c(x_o)$.
Finally, we have
\begin{align*}
	h(x_r) &= \ell(x_r, c(x_o)) + \sup_{x_r^a} ~ \ell(x_r^a, c(x_r)) \\
	&= \ell(x_r, c(x_o)) + \sup_{x_r^a} ~ \ell(x_r^a, c(x_o)) \tag{$\because c(x_r) = c(x_o)$} \\
	&\le 2 \sup_{x_r^a} ~ \ell(x_r^a, c(x_o)) \tag{$\because \ell(x_r, c(x_o)) \le \sup_{x_r^a} \ell(x_r^a, c(x_o))$} \\
	&= 2\tilde{h}(x_r).
\end{align*}
\qed

\subsection{Satisfiability of the Assumption in Lemma 1}

Lemma 1 requires the assumption that the loss $\tilde{h}(x_r)= \sup_{x_r^a \in B_\epsilon(x_r)} \ell(x_r^a, c(x_o))$ should be less or equal to $-\log(0.5)$. Our preemptive robustifcation algorithm can sufficiently minimize the loss to meet the assumption. To verify this, we measure the loss values of the resulting preemptively robustified images $x_r$ from CIFAR-10 test images that are correctly classified. \Cref{fig:xents_safe_adv} shows that 99.3\% of the images satisfy the assumption.

% Sec B
\section{Computing Update Gradient without Second-Order Derivatives}

\subsection{Proof of Lemma 2}

It is enough to compute the Jacobian of \small$\dfrac{g}{\|g\|_2}$\normalsize. By the quotient rule, we have
\begin{align}
	\label{eqn:quotient}
	\nabla_x \left( \frac{g}{\|g\|_2} \right) &= \frac{\|g\|_2 \cdot H - g \cdot (\nabla_x \| g \|_2)^\intercal }{\|g\|_2^2} \nonumber \\
	&= \frac{H}{\| g \|_2} - \frac{g \cdot (\nabla_x \| g \|_2)^\intercal}{\|g\|_2^2}.
\end{align}    
Now, we compute $\nabla_x \| g \|_2$. Since $\| g \|_2^2 = \langle g, g \rangle$, we have
\begin{align}
	\label{eqn:inner}
	2 \| g \|_2 \cdot \nabla_x \| g \|_2 &= \nabla_x \langle g, g \rangle = 2 H \cdot g.
\end{align}    
Plugging \Cref{eqn:inner} into \Cref{eqn:quotient}, we have
\begin{align*}
	\nabla_x \left( \frac{g}{\|g\|_2} \right) &= \frac{H}{\| g \|_2} - \frac{g \cdot (\nabla_x \| g \|_2)^\intercal}{\|g\|_2^2} \\
	&= \frac{H}{\| g \|_2} - \frac{g \cdot g^\intercal \cdot H^\intercal }{\|g\|_2^3} \\
	&= \left(I - \left( \frac{g}{\| g \|_2} \right) \left (\frac{g}{\| g \|_2} \right)^\intercal \right) \frac{H}{\| g \|_2},
\end{align*}
since $H$ is symmetric, and this completes the proof. \qed

\begin{figure}[ht]
	\centering
	\includegraphics[width=\columnwidth]{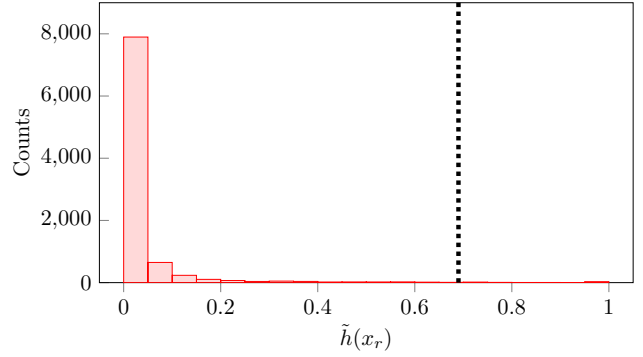}
	\caption{Histogram of the loss values $\tilde{h}(x_r)$ on CIFAR-10 test images. The dotted line indicate the upper bound for $\tilde{h}(x_r)$ to satisfy the assumption in Lemma 1.}
	\label{fig:xents_safe_adv}
\end{figure}

\subsection{Proof of Proposition 1}

Note that \small$P = I - \left( \dfrac{g}{\| g \|_2} \right) \left( \dfrac{g}{\| g \|_2} \right)^\intercal$\normalsize is a projection map onto a hyperplane whose normal vector is \small$\dfrac{g}{\| g \|_2}$\normalsize. Since a projection map is a contraction map, we have
\small
\begin{align*}
	& \left\| \nabla_x f^\intercal a \right\|_2 \\
	&~~= \left\| \left( I + \alpha \cdot \left(I - \left(\frac{g}{\| g \|_2}\right)\left(\frac{g}{\| g \|_2}\right)^\intercal \right) \frac{H}{\| g \|_2} \right)^\intercal a \right\|_2 \tag{$\because$ By Lemma 2} \\
	&~~= \left\| a + \alpha \cdot \frac{H^\intercal}{\| g \|_2} \left(I - \left(\frac{g}{\| g \|_2}\right)\left(\frac{g}{\| g \|_2}\right)^\intercal \right)^\intercal a \right\|_2 \\
	&~~= \left\| a \right\|_2 + \left\| \alpha \cdot \frac{H^\intercal}{\| g \|_2} \left(I - \left(\frac{g}{\| g \|_2}\right)\left(\frac{g}{\| g \|_2}\right)^\intercal \right)^\intercal a \right\|_2 \tag{$\because$ By the triangular inequality} \\
	&~~\le \left\| a \right\|_2 + \alpha \cdot \frac{\left\| H^\intercal \right\|_2}{\| g \|_2} \cdot \left\| \left(I - \left(\frac{g}{\| g \|_2}\right)\left(\frac{g}{\| g \|_2}\right)^\intercal \right)^\intercal \right\|_2 \left\| a \right\|_2 \tag{$\because \| A B \|_2 \le \| A \|_2 \| B \|_2$} \\
	&~~= \left\| a \right\|_2 + \alpha \cdot \frac{\sigma}{\| g \|_2} \cdot \left\| \left(I - \left(\frac{g}{\| g \|_2}\right)\left(\frac{g}{\| g \|_2}\right)^\intercal \right)^\intercal \right\|_2 \left\| a \right\|_2 \tag{$\because$ $H$ is symmetric}\\
	&~~= \left\| a \right\|_2 + \alpha \cdot \frac{\sigma}{\| g \|_2} \cdot \left\| a \right\|_2 \tag{$\because$ $P$ is a contraction map},
\end{align*}
\normalsize

Finally, the projection operator $\Pi_{x_r, \epsilon}$ has a form of $\Pi_{x_r, \epsilon}(x) = k(x - x_r) + x_r$, where  $0 < k \le 1$. Therefore, we have 
\begin{align*}
	 \left\| \nabla_x \tilde{f}^\intercal a \right\|_2 &= \left\| \nabla_x f^\intercal \nabla_x  \left( \Pi_{x_r, \epsilon} \right)^\intercal a \right\|_2 \tag{$\because ~ \tilde{f} = \Pi_{x_r, \epsilon} \circ f$ } \\
	 &= k \left\| \nabla_x f^\intercal a \right\|_2 \tag{$\because~ \nabla_x \Pi_{x_r, \epsilon} = kI $} \\
	& \le \left\| \nabla_x f^\intercal a \right\|_2 \tag{$\because ~ 0 < k \le 1$} \\
	& \le \left\| a \right\|_2 + \alpha \cdot \frac{\sigma}{\| g \|_2} \cdot \left\| a \right\|_2,
\end{align*}
which completes the proof. \qed

% Figure 1
\begin{figure}[ht]
	\centering
	\includegraphics[width=0.9\columnwidth]{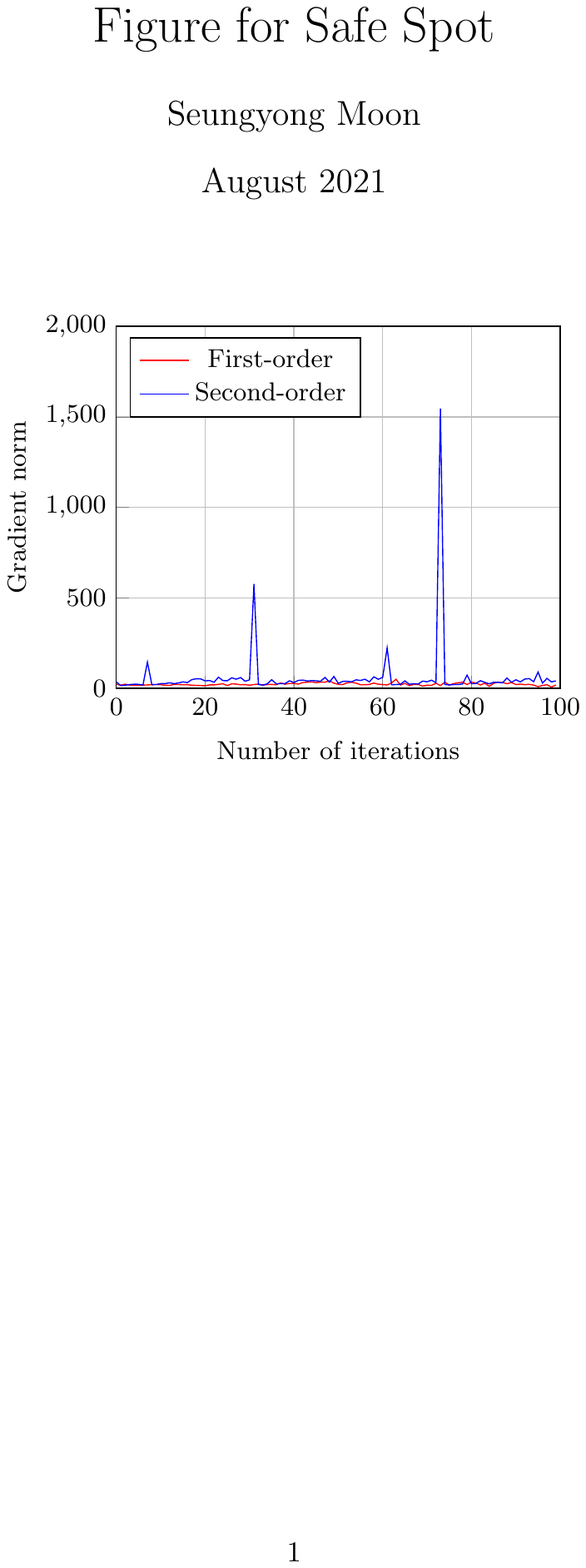}
	\caption{The $\ell_2$ norm of update gradient at each iteration during preemptive robustification process with the first-order approximation and with the exact computation using second-order derivatives for an image in CIFAR-10 test set. The approximate gradient update can successfully robustify the image, while the exact gradient update does not.}
	\label{fig:norm_grad}
\end{figure}

\subsection{Experiments with Second-Order Derivatives}

To examine whether the exact gradient computation incurs exploding update gradients, we measure the $\ell_2$ norm of the update gradient computed with second-order derivatives at each step, compared to its first-order approximation. We run our experiment on a CIFAR-10 classifier and consider $\ell_2$ perturbations under $\delta = \epsilon = 0.5$. \Cref{fig:norm_grad} shows that preemptive robustification with exact gradient computation leads to unstable update gradients, in contrast to the approximate gradient update.

% Sec C
\section{Implementation Details}\label{sec:impl}

We fix $\text{MAXITER} =100$, $T=20$, $\alpha=\epsilon / 4$, $N=1$, and $L=1$ throughout all the experiments. For implementing the preemptive robustification algorithm, we transform $B_\delta(x_o)$ to $\mathbb{R}^n$ via $\mathrm{tanh}$ transformation and use RMSProp optimizer \citep{tieleman2012lecture} to update preemptively robustified images when dealing with $\ell_\infty$ perturbations. Under $\ell_2$ perturbations, we use projected gradient descent method as default. For implementing the adaptive white-box attack, we sweep the final perturbation budget of the adversary with $\epsilon' \in \{ 0.25\epsilon, 0.5\epsilon, 0.75\epsilon, 1.0\epsilon \}$.

\subsection{Experiments on CIFAR-10 and ImageNet}

\paragraph{CIFAR-10} For network training, we use Wide ResNet architecture \cite{zagoruyko2017wide} with depth 34 and width 10. We train the models for 200 epochs with an initial learning rate of $0.1$, decayed with a factor of $0.1$ on epoch 100 and 150. We use SGD optimizer with weight decay $5E-4$, momentum $0.9$, and batch size $128$. We trained ADV models \citep{rice20} using 10-step PGD with a step size of $\epsilon / 4$ and select the checkpoint with the best performance on the test set. For preemptively robust models, we use the same 10-step PGD for adversarial example generations, set $\beta$ during model training to $\epsilon$, and choose the latest checkpoint.

We take the whole 10,000 images from CIFAR-10 test set and run our preemptive robustification algorithm on them. On $\ell_\infty$ perturbations with $\epsilon=8/255$, we set the learning rate $\beta=0.1$, and on $\ell_2$ perturbations with $\epsilon=0.5$, we set $\beta=0.001$. Each $\beta$ is chosen by tuning between a range of $\{1.0, 0.5, 0.1, 0.05, 0.01, 0.001\}$ and $\{0.1, 0.01, 0.001, 0.0001, 0.00001\}$. 

\paragraph{ImageNet} For network training, we use ResNet-50 architecture \cite{he2016}. For ADV models, we train the models with adversarial training using 2-step PGD to reduce the training cost. The PGD step size is set to $\epsilon$, following the protocol of \citet{fast}. Additionally, we adopt the learning rate and input preprocessing schedule of \citet{fast}, except that we set the input image size to $224 \times 224$ in phase 3, instead of $288 \times 288$. For preemptively robust models, we use the 2-step PGD for adversarial example generation and tune $\delta$ and $\beta$ during model training to $\epsilon/2$.

We randomly sample 10,000 images from the test set, randomly crop and resize them to $224 \times 224$, and run our preemptive robustification algorithm on the transformed images. We consider $\ell_\infty$ perturbations with $\epsilon=4/255$ and $\ell_2$ perturbations with $\epsilon=3.0$. On the $\ell_\infty$ perturbations, $\beta$ is set to $0.1$, chosen by tuning between a range of $\{1.0, 0.5, 0.1, 0.05, 0.01\}$. On $\ell_2$ perturbations, we tuned $\beta$ between a range of $\{1.0, 0.1, 0.01, 0.001, 0.0001\}$ and choose 0.1.

\subsection{Experiments on Randomized Smoothing}

We use the same network structures from Section 4.1 and 4.2 and train networks with additive Gaussian noises drawn from $\mathcal{N}(0, \sigma^2 I)$. For smoothed models on ImageNet, we use the pre-trained models provided by \citet{cohen19}. We take 50 Gaussian samples and choose the most probable class by a majority vote for class prediction.

To evaluate the empirical robustness on the smoothed classifiers, we set the noise level $\sigma$ to 0.1 on CIFAR-10 and 0.25 on ImageNet. The PGD settings are the same as in the experiments for base classifiers. We take 5 Gaussian samples for preemptive robustification and 50 Gaussian samples for evaluation.

To measure the certified robustness on the smoothed classifiers, we set noise levels to be larger than those used in evaluating the empirical robustness since a larger $\sigma$ leads to larger certified radii. We set $\sigma$ to 0.25 and 1.0 for the smoothed classifiers on CIFAR-10 and ImageNet, respectively. We take 100,000 Gaussian samples to compute certified radii of images.

For CIFAR-10, the learning rate $\beta$ is set to $0.005$, chosen within a range of $\{0.01, 0.005, 0.001, 0.0005, 0.0001\}$. For ImageNet, the learning rate $\beta$ is set to $0.1$, chosen within a range of $\{1.0, 0.5, 0.1, 0.05, 0.01\}$.

\subsection{Computing Infrastructure}

We conduct all experiments on Ubuntu 16.04 with a hardware system comprised of Intel Xeon Gold 5220R CPU, 256GB RAM, and NVIDIA RTX 2080 Ti GPU. We use PyTorch \cite{pytorch} as a deep learning framework.

% Sec D
\section{Visualization of Preemptively Robustified Examples}

To understand how our preemptive robustification algorithm modifies each image, we tried increasing both the modification budgets ($\delta$ and $\epsilon$) for visualization. \Cref{fig:examples} shows the resulting examples. Interestingly, we find that our robustification process emphasizes human perceptible features of the original images, such as mountain peaks covered with snow or the fur color of the fox. Also, it tends to render colors much more vivid, and the object boundaries crisper. The results suggest that our preemptive robustification algorithm could also be applied to image synthesis tasks such as style transfer, super-resolution, or colorization, aligned with the findings of \citet{santurkar19single}.

\begin{figure*}[ht]
	\centering
	\includegraphics[width=\textwidth]{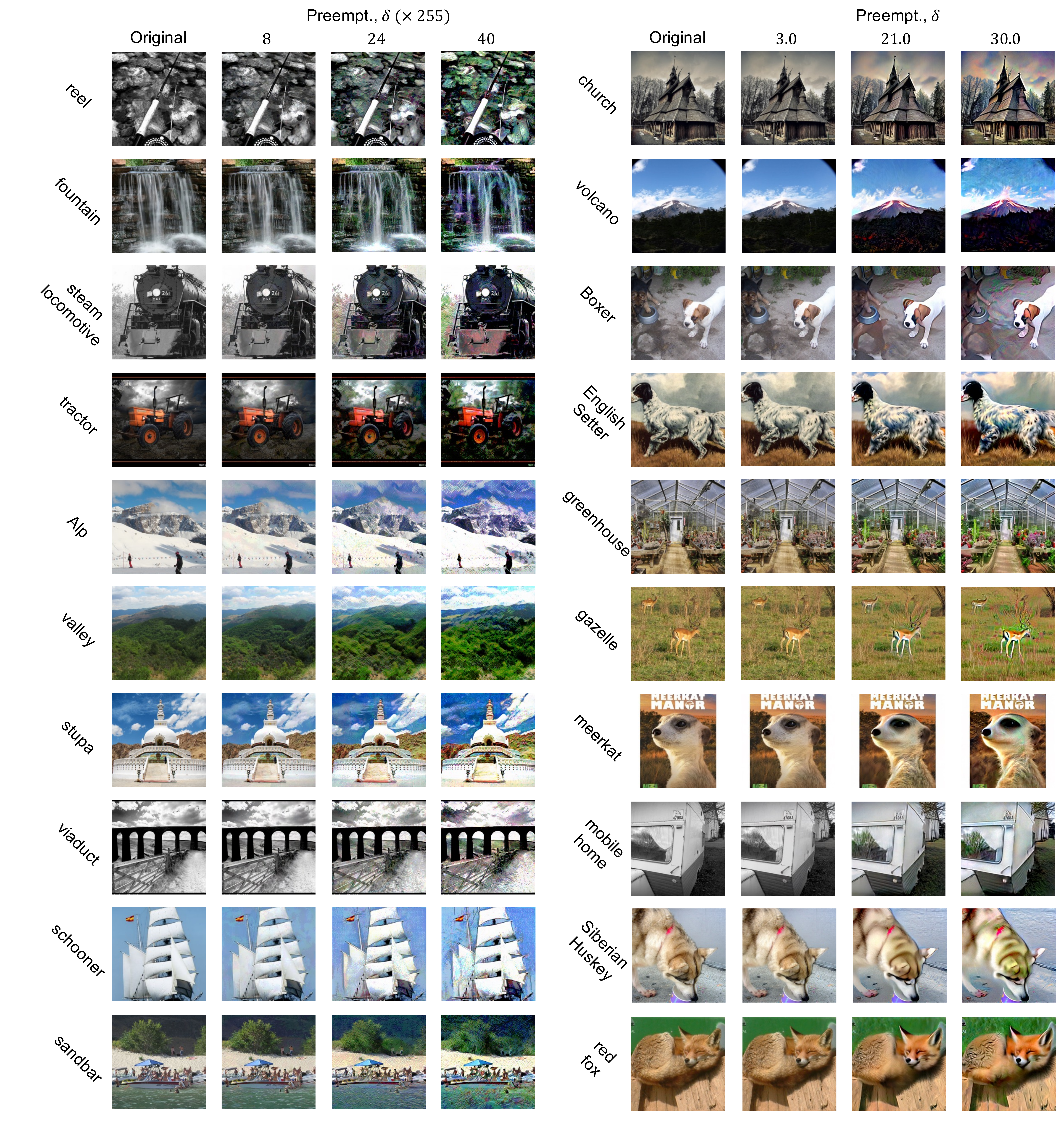}
	\caption{Examples of preemptively robustified images on ImageNet in $\ell_\infty$ (left) and $\ell_2$ (right) settings.}
	\label{fig:examples}
\end{figure*}

\end{document}